\documentclass[conference]{IEEEtran}
\IEEEoverridecommandlockouts
\usepackage{cite}
\usepackage{amsmath,amssymb,amsfonts}
\usepackage{graphicx}
\usepackage{textcomp}
\usepackage{xcolor}
\def\BibTeX{{\rm B\kern-.05em{\sc i\kern-.025em b}\kern-.08em
    T\kern-.1667em\lower.7ex\hbox{E}\kern-.125emX}}

\usepackage{amsthm}
\usepackage{tabularx}
\usepackage{adjustbox}
\usepackage{array}
\usepackage{booktabs}
\usepackage{algorithm}
\usepackage[noend]{algpseudocode}
\usepackage{csquotes}
\usepackage{url}
\usepackage{tikz}
\usetikzlibrary{arrows.meta, positioning, calc, patterns, decorations.pathreplacing}

\newtheorem{proposition}{Proposition}

\DeclareMathOperator*{\argmax}{arg\,max}
\DeclareMathOperator*{\argmin}{arg\,min}

\begin{document}

\title{An Introduction to Compression-Based Machine Learning}

 \author{
 \IEEEauthorblockN{
 John Hurwitz\IEEEauthorrefmark{1},
 Edward Raff\IEEEauthorrefmark{1}\IEEEauthorrefmark{2}\IEEEauthorrefmark{3},
 Charles Nicholas\IEEEauthorrefmark{1}
 }

 \IEEEauthorblockA{\IEEEauthorrefmark{1}University of Maryland, Baltimore County}
 \IEEEauthorblockA{\IEEEauthorrefmark{2}CrowdStrike}
 \IEEEauthorblockA{\IEEEauthorrefmark{3}Syracuse University}
 }

\maketitle

\begin{abstract}
    Any lossless compression algorithm (like gzip) may be converted into a machine learning method, via either Normalized Compression Distance or the Minimum Description Length principle. Any auto-regressive model may be converted into a lossless compression method via entropy coding. This seemingly circular dependence has unrealized potential in modern artificial intelligence and machine learning, and we survey and formalize the various strategies that have been used to leverage compression for machine learning. We introduce and empirically validate a design framework for compression-based ML, finding compression-based methods competitive with conventional baselines and decisively stronger on malware. We find that varying these design choices yields accuracy gains of up to 0.62.
\end{abstract}

\begin{IEEEkeywords}
Data compression, machine learning, normalized compression distance, minimum description length, Kolmogorov complexity, information theory, malware classification, text categorization
\end{IEEEkeywords}

\section{Introduction}

In 1999, Mahoney \cite{mahoney1999text} posited that predicting the next character of a sentence with high accuracy should be the litmus test of AI, as improved prediction requires understanding the text semantically 
(foreshadowing large language models). He also noted that this task is equivalent to compression, and so he studied different compression algorithms like \texttt{gzip} and their ability to approach this next-character prediction target, concluding: 
\begin{displayquote}
\textit{It is somewhat comforting to find that compression, like AI, is still unsolved.}
\end{displayquote}
Despite a quarter-century of progress, it is peculiar that AI is now much closer to being ``solved'' in the eyes of many compared to compression. Today, a niche area of research has blossomed, directly connecting AI/ML and compression, often challenging our assumptions about the source of ML's successes and failures. In this article, we survey these compression-based machine learning methods in light of the remarkable room for improvement and, at the same time, shed light on the insights that might still be extracted from Mahoney's prognostications. 

Compression and prediction have long been recognized as closely related concepts, with MacKay famously noting that information theory and machine learning are ``two sides of the same coin"~\cite{mackay2003information}. At the heart of data compression, the art of reducing the number of bits required to store information, is the prediction of future data~\cite{nelson1995}. This connection has led to a growing body of machine learning literature on the use of compression algorithms for machine learning tasks. To be specific, we are interested in the connections between lossless compression algorithms in the classical computer science sense, like \texttt{gzip}, and machine learning tasks. A broad survey of all inspirations of ``compression'' (e.g., like the hidden state of an auto-encoder) is beyond the scope of this article (lossy compression, which allows less than perfect reconstruction of the input, is also beyond our scope). Kolmogorov complexity~\cite{kolmogorov1963} will be of particular theoretical foundation for our discussion, as it is the length of the shortest program that produces a given input. This is an uncomputable function, but provides the basis of our interest in \textit{information distance} \cite{Bennett1998InformationDistance,vitanyi2008normalizedinformationdistance}, an information-theoretic notion of distance based on \textit{algorithmic} similarity. By replacing the Kolmogorov complexity with an empirical compression algorithm, one obtains a \emph{compression distance}~\cite{li2004similarity}, and along with it, practical methods for leveraging compression in machine learning. 

At a high level, compressed lengths can be interpreted in two distinct ways: either as a measure of similarity between objects, or as a proxy for likelihood under a model. Viewed through the similarity lens, compressed lengths induce a compression distance function that can work for any byte sequence by leveraging the fact that two similar sequences should produce improved compression ratios if considered together. This is the insight behind the seminal Normalized Compression Distance (NCD)~\cite{li2004similarity}. Viewed through the likelihood lens, we obtain another way of using compression for prediction inspired by the Minimum Description Length (MDL)~\cite{rissanen1978} principle. As there is a direct correspondence between code length and probability, a compression model that achieves the shortest code length when compressing a sample is equivalent to the model that maximizes its likelihood. Since compressors operate over bytes, compression-based ML approaches have been especially useful in the cybersecurity domain, where deep learning struggles to learn useful representations for executable files, resulting in various applications for malware classification~\cite{borbely2015normalizedcompressiondistancelarge,raff_lzjd_2017,alshahwan2015detectingmalwareinformationcomplexity,Wehner2005AnalyzingWA,Raff2020a,raff_cybersecurity_2026}.

\textbf{Scope.}
Throughout this survey, we use the term compression in the classical lossless
data compression sense: algorithms that reduce the number of bits required to
represent data while preserving exact reconstruction. Our focus is on how such
compressors induce similarity measures, likelihood estimates, and predictive
models for machine learning. We do not survey model compression techniques
whose goal is to reduce the size or computational cost of machine learning
models themselves, including quantization, pruning, distillation,
sparsification, low-rank approximation, or related efficient-inference methods.
Although these areas also involve compression, they address a fundamentally different question: how to compress a learned model rather than how to perform learning using compression.

To properly position the reader to appreciate the unique connection between data compression and modern ML, we will begin with historical context and key terms in compression from \S \ref{sec:compression_fundamentals}, including how probabilistic predictive models can be turned into a lossless compression algorithm through entropy coding. This connection has only recently been leveraged by modern deep learning to develop better compression algorithms, as we highlight in \S \ref{sec:ml_for_compression}. We then introduce the theoretical foundations of the NCD and the MDL principle in \S \ref{sec:two_lenses}, before unifying existing compression-based ML methods into a common design framework in \S \ref{sec:comp_ml} with empirical validation. We will address current challenges and open questions in \S \ref{sec:challenges}. Finally, we will conclude in \S \ref{sec:conclusion}. 

\section{Compression Fundamentals} \label{sec:compression_fundamentals}
Data compression refers to schemes that try to represent data using fewer bits than were used in the original file. \textit{Lossless} compression refers to such schemes that guarantee perfect reconstruction of the input data. Some examples of lossless compression programs are DEFLATE~\cite{rfc1951} (as utilized in \texttt{gzip} and {PNG}), \texttt{bzip2}, \texttt{LZMA}, and \texttt{zstandard}.

Every compression technique can be viewed as a combination of two distinct stages: \textit{modeling} and \textit{coding}~\cite{nelson1995}. Modeling is the process of identifying regularities in the data, while coding converts the modeled representation into a compact binary representation. There are two main forms of modeling: statistical and dictionary-based. A statistical model yields probabilities for each symbol, and these probabilities can be encoded in a manner where common symbols use fewer bits than do rare symbols~\cite{Cover2009}. Therefore, any model which produces next-symbol probabilities can be converted into a lossless compression scheme by performing \textit{entropy coding} on the probabilities, converting each symbol into a binary representation.
Good compression is achieved when the model produces high as well as accurate probabilities. In contrast, dictionary-based models don't explicitly model per-symbol probabilities; rather, they search for regularity in data by identifying repeated substrings~\cite{nelson1995}. Dictionaries of substrings can either be explicitly or implicitly maintained by keeping a sliding window of past symbols in memory, within which the program can search for matches.

Most popular compression programs nowadays are descendants of LZ77~\cite{ziv1977universal}, the dictionary-based compression algorithm from Lempel and Ziv's seminal paper on sequential data compression. The core idea of the algorithm is to search for backreferences to previously seen substrings in the input stream. Their subsequent work~\cite{ziv1978compression} introduced LZ78, a compression algorithm that explicitly maintained a dictionary of previously seen substrings rather than maintaining a sliding window over a portion of the input stream. Many other techniques for compression exist, though most practical tools use LZ-style algorithms. For a deeper review of data compression techniques, see~\cite{nelson1995,mahoney2013}.

\subsection{Information Theory: Entropy and Kolmogorov complexity}
Data compression has strong theoretical roots in information theory.
Let $\mathcal{X}$ be a discrete alphabet and $p$ a distribution over
it. (Throughout, logarithms are base two and information is measured
in bits.) The \textit{information content} of a symbol
$x\in\mathcal{X}$ is $-\log p(x)$, the ideal code length for that
symbol. The \textit{entropy} of $p$ is the expected information
content of its symbols,
$H(p) := -\sum_{x\in\mathcal{X}} p(x)\log p(x)$, and by Shannon's Source Coding Theorem~\cite{shannon1948} is the
ultimate lossless compression limit for a known distribution.
Encoding data drawn from $p$ with a code optimal for some other
distribution $q$ instead costs the \textit{cross-entropy} $H(p,q) = H(p) + \mathrm{KL}(p\,\|\,q),$
which exceeds the
entropy by the Kullback--Leibler divergence.

\emph{Entropy coding} refers to a class of algorithms which attempt to realize these ideal code lengths in practice, converting a model's probabilities into a bit string whose length approaches $-\log p(x)$ per symbol. Two of the most common entropy coding techniques are Huffman coding~\cite{huf52}, which assigns each symbol a unique integer-length code where common symbols are represented with fewer bits than rare symbols, and arithmetic coding~\cite{arithmetic_coding}, which represents the entire sequence as a single real number of arbitrary precision. Modern neural network-based compression techniques often use arithmetic coding due to it being adaptive, as it can be used when the distribution changes on a per-symbol basis. 

Kolmogorov complexity~\cite{kolmogorov1963} $K(x)$ is the length of the shortest computer program that outputs the input string $x$ and then halts. While entropy gives the ultimate compression limit for probability distributions, $K$ gives such a limit for \emph{individual objects}. %
Conditional Kolmogorov complexity $K(x|y)$ is the length of the shortest program that outputs an input string given some second string $y$ as input, formalizing the minimum amount of information needed to compute one object from another. Though uncomputable, $K$ is upper semi-computable. %
Kolmogorov complexity formalizes the length of an object's shortest effective description, and characterizes when a string is incompressible (a program generating a high-complexity string essentially encodes the string itself). Conditional Kolmogorov complexity inspired the \textit{information distance} $\max\{K(x|y), K(y|x)\}$ and subsequently normalized information distance $\text{NID}(x,y) = \frac{\max\{K(x|y), K(y|x)\}}{\max\{K(x),K(y)\}}$, a metric in $[0,1]$ that theoretically captures ``any effective resemblance between two objects''~\cite{vitanyi2008normalizedinformationdistance}, in the sense that it minorizes all admissible distances~\cite{li2004similarity}. For a deep treatment of Kolmogorov complexity, see~\cite{LiVitanyi2008}.

\section{Machine learning for compression} 
\label{sec:ml_for_compression}
\label{ml_compression}
Neural compression refers to the use of neural networks to perform data compression. Neural compression techniques have in recent years become state-of-the-art lossless compressors in domains such as text~\cite{mahoney_large_text_compression,bellard2021nncp,cmix},  images~\cite{mentzer2020practicalresolutionlearnedlossless,li2024calliccontentadaptivelearning}, and driving video~\cite{commavq} due to the success of neural networks in complex data modeling. The two primary lossless neural compression techniques are those based on \textit{autoregressive models} and those based on \textit{latent variable models}. For a comprehensive introduction to neural compression approaches, see~\cite{yang2023introductionneuraldatacompression}. 

Autoregressive models,  such as Large Language Models (LLMs), predict the next symbol given previous symbols. These probabilities can be encoded via entropy coding, and the better the model, the better the compression. The negative log-likelihood (NLL) of each symbol under the model provides a lower bound on the number of bits required to encode that symbol, illustrating that cross-entropy loss can be interpreted as a compression objective. This fact has resulted in a number of works applying autoregressive generative models directly to lossless text compression~\cite{deletang2024languagemodelingcompression,valmeekam2023llmziplosslesstextcompression,goyal2020dzipimprovedgeneralpurposelossless} and image compression~\cite{oord2016conditionalimagegenerationpixelcnn}. Latent variable models such as Variational Autoencoders (VAEs) can be combined with the ``bits back coding'' technique~\cite{wallace1990classification,frey1996free} to implement a lossless compression scheme.

Compression benchmarks have been introduced with the goal of spurring AI research. The Hutter prize~\cite{Hutter:06hprize} is a challenge to losslessly compress 1GB of Wikipedia to the smallest file possible, under computational constraints. The Large Text Compression Benchmark~\cite{mahoney_large_text_compression} is the same challenge but with relaxed computational constraints. Both benchmarks are led by neural compression techniques, with the top entry of the Hutter prize a variant of CMIX~\cite{cmix}, which uses an ensemble of neural networks to perform bit-level prediction. The top two entries of the Large Text Compression Benchmark are variants of CMIX and NNCP~\cite{bellard2021nncp}, the latter a neural network approach which trains over the sequence in an online fashion and compresses with arithmetic coding. For illustration, as of September 2026,
\texttt{gzip} compresses 1 GB of Wikipedia to approximately 323 MB, while the top entry, \texttt{fx2-cmix-transformer}, compresses to approximately 97 MB~\cite{mahoney_large_text_compression}, including the size of their respective programs. Further, similar neural compression techniques lead the CommaVQ compression challenge to losslessly compress driving video frames tokenized by a VQ-VAE~\cite{oord2018neuraldiscreterepresentationlearning}.

\section{Theoretical Foundations of Compression for Machine Learning: Two Lenses}\label{sec:two_lenses}

We now introduce the theoretical foundations of the compression-as-similarity lens and the compression-as-likelihood lens. Existing compression-based ML approaches differ primarily in how the compressor receives context and the interpretation of compressed lengths for prediction. Similarity approaches treat individual data samples as context for a compression distance calculation, exemplified by NCD. Likelihood approaches learn models from samples and treat those models as context, selecting the model which best compresses the data, exemplified by MDL.

\subsection{Compression as Similarity}
Any lossless compression algorithm induces an information-theoretic compression distance, capturing a notion of algorithmic (dis)similarity. The fundamental insight of compression distance is that when two objects are compressed together, the resulting file size will be smaller if the compressor discovers shared information in one to help reconstruct the other. Let $C(x)$ be the length of the compressed output in bytes when compressing $x$ with some compression algorithm. Then the Normalized Compression Distance (NCD) between two sequences $x$ and $y$ is defined as follows:
\begin{equation}    \label{eq:NCD}
    \textnormal{NCD}(x, y) = \frac{C(xy) - \min \{ C(x), C(y)\}}{\max \{ C(x), C(y)\}}
\end{equation}
NCD is easier to understand if we assume (without loss of generality) that $C(y) \geq C(x)$. Then the NCD becomes $\textnormal{NCD}(x,y) = \frac{C(xy) - C(x)}{C(y)}$. The numerator here represents the extra number of bits required to compress $y$ (the sequence with the larger compressed representation) given the information in $x$ (the shorter compressed representation), effectively testing how well the compressor can reuse information in $x$ to reconstruct $y$. Effective reuse indicates information-theoretic similarity. The denominator is a normalizing factor enforcing a range of $[0,1+\epsilon]$. We provide an illustration of how NCD is used as a similarity measure in Fig.~\ref{fig:ncd_example}.

\begin{figure}[!h]
\centering
\definecolor{fileA}{RGB}{66,133,244}   %
\definecolor{fileB}{RGB}{52,168,83}    %
\definecolor{fileC}{RGB}{234,67,53}    %
\definecolor{savings}{RGB}{46,139,87}
\definecolor{nosavings}{RGB}{180,60,60}

\pgfdeclarepatterninherentlycolored{diagA}{\pgfqpoint{-1pt}{-1pt}}{\pgfqpoint{5pt}{5pt}}{\pgfqpoint{4pt}{4pt}}{
  \pgfsetlinewidth{0.8pt}
  \pgfsetcolor{fileA!70}
  \pgfpathmoveto{\pgfqpoint{-0.5pt}{-0.5pt}}
  \pgfpathlineto{\pgfqpoint{5.5pt}{5.5pt}}
  \pgfusepath{stroke}
}
\pgfdeclarepatterninherentlycolored{diagB}{\pgfqpoint{-1pt}{-1pt}}{\pgfqpoint{5pt}{5pt}}{\pgfqpoint{4pt}{4pt}}{
  \pgfsetlinewidth{0.8pt}
  \pgfsetcolor{fileB!70}
  \pgfpathmoveto{\pgfqpoint{-0.5pt}{-0.5pt}}
  \pgfpathlineto{\pgfqpoint{5.5pt}{5.5pt}}
  \pgfusepath{stroke}
}
\pgfdeclarepatterninherentlycolored{diagC}{\pgfqpoint{-1pt}{-1pt}}{\pgfqpoint{5pt}{5pt}}{\pgfqpoint{4pt}{4pt}}{
  \pgfsetlinewidth{0.8pt}
  \pgfsetcolor{fileC!70}
  \pgfpathmoveto{\pgfqpoint{-0.5pt}{-0.5pt}}
  \pgfpathlineto{\pgfqpoint{5.5pt}{5.5pt}}
  \pgfusepath{stroke}
}

\pgfdeclarepatterninherentlycolored{diagAB}{\pgfqpoint{-1pt}{-1pt}}{\pgfqpoint{9pt}{9pt}}{\pgfqpoint{8pt}{8pt}}{
  \pgfsetlinewidth{0.8pt}
  \pgfsetcolor{fileA!70}
  \pgfpathmoveto{\pgfqpoint{-0.5pt}{-0.5pt}}
  \pgfpathlineto{\pgfqpoint{5.5pt}{5.5pt}}
  \pgfusepath{stroke}
  \pgfsetcolor{fileB!70}
  \pgfpathmoveto{\pgfqpoint{3.5pt}{-0.5pt}}
  \pgfpathlineto{\pgfqpoint{9.5pt}{5.5pt}}
  \pgfusepath{stroke}
}
\pgfdeclarepatterninherentlycolored{diagAC}{\pgfqpoint{-1pt}{-1pt}}{\pgfqpoint{9pt}{9pt}}{\pgfqpoint{8pt}{8pt}}{
  \pgfsetlinewidth{0.8pt}
  \pgfsetcolor{fileA!70}
  \pgfpathmoveto{\pgfqpoint{-0.5pt}{-0.5pt}}
  \pgfpathlineto{\pgfqpoint{5.5pt}{5.5pt}}
  \pgfusepath{stroke}
  \pgfsetcolor{fileC!70}
  \pgfpathmoveto{\pgfqpoint{3.5pt}{-0.5pt}}
  \pgfpathlineto{\pgfqpoint{9.5pt}{5.5pt}}
  \pgfusepath{stroke}
}

\resizebox{\columnwidth}{!}{%
\begin{tikzpicture}[
    file/.style={draw, rounded corners=2pt, minimum height=0.6cm, inner sep=4pt, font=\ttfamily\footnotesize},
    outblock/.style={draw, rounded corners=2pt, minimum height=0.6cm},
    every node/.style={align=center},
]

\node[font=\footnotesize\bfseries, anchor=west] at (0, 0) {Three byte sequences:};

\node[file, fill=fileA!15, draw=fileA!70] (x) at (1.2, -0.8) {\textcolor{fileA!80!black}{ababa}};
\node[font=\scriptsize, anchor=south] at (x.north) {$x$};

\node[file, fill=fileB!15, draw=fileB!70] (y) at (3.2, -0.8) {\textcolor{fileB!80!black}{babab}};
\node[font=\scriptsize, anchor=south] at (y.north) {$y$};

\node[file, fill=fileC!15, draw=fileC!70] (z) at (5.4, -0.8) {\textcolor{fileC!80!black}{abcefg}};
\node[font=\scriptsize, anchor=south] at (z.north) {$z$};

\node[font=\footnotesize\bfseries, anchor=west] at (0, -2.0) {Step 1: Compress each};

\node[font=\scriptsize] (cx_lp) at (0.6, -2.8) {$C($};
\node[file, fill=fileA!15, draw=fileA!70, anchor=west] (cx_in) at (cx_lp.east) {\textcolor{fileA!80!black}{\scriptsize ababa}};
\node[font=\scriptsize, anchor=west] (cx_rp) at (cx_in.east) {$)=$};
\node[outblock, minimum width=0.7cm, pattern=diagA, draw=fileA!70, anchor=west] (cx_out) at (cx_rp.east) {};
\node[font=\scriptsize, anchor=west] (cx_size) at (cx_out.east) {4\,B};

\node[font=\scriptsize] (cy_lp) at (0.6, -3.6) {$C($};
\node[file, fill=fileB!15, draw=fileB!70, anchor=west] (cy_in) at (cy_lp.east) {\textcolor{fileB!80!black}{\scriptsize babab}};
\node[font=\scriptsize, anchor=west] (cy_rp) at (cy_in.east) {$)=$};
\node[outblock, minimum width=0.7cm, pattern=diagB, draw=fileB!70, anchor=west] (cy_out) at (cy_rp.east) {};
\node[font=\scriptsize, anchor=west] (cy_size) at (cy_out.east) {4\,B};

\node[font=\scriptsize] (cz_lp) at (0.6, -4.4) {$C($};
\node[file, fill=fileC!15, draw=fileC!70, anchor=west] (cz_in) at (cz_lp.east) {\textcolor{fileC!80!black}{\scriptsize abcefg}};
\node[font=\scriptsize, anchor=west] (cz_rp) at (cz_in.east) {$)=$};
\node[outblock, minimum width=0.9cm, pattern=diagC, draw=fileC!70, anchor=west] (cz_out) at (cz_rp.east) {};
\node[font=\scriptsize, anchor=west] (cz_size) at (cz_out.east) {5\,B};

\node[font=\scriptsize, text=gray!70!black, text width=2.0cm, anchor=west] (annot) at (5.2, -3.6) {\textit{Compressed size: shorter $=$ more redundancy found}};
\draw[-{Stealth}, gray!60, thin] (annot.west) -- (cx_size.east);
\draw[-{Stealth}, gray!60, thin] (annot.west) -- (cy_size.east);
\draw[-{Stealth}, gray!60, thin] (annot.west) -- (cz_size.east);

\node[font=\footnotesize\bfseries, anchor=west] at (0, -5.5) {Step 2: Compress concatenations};

\node[font=\scriptsize] (cxy_lp) at (0.6, -6.4) {$C($};
\node[file, fill=fileA!15, draw=fileA!70, inner sep=2pt, anchor=west] (cxy_in1) at (cxy_lp.east) {\textcolor{fileA!80!black}{\scriptsize ababa}};
\node[file, fill=fileB!15, draw=fileB!70, inner sep=2pt, anchor=west] (cxy_in2) at ([xshift=-0.4pt]cxy_in1.east) {\textcolor{fileB!80!black}{\scriptsize babab}};
\node[font=\scriptsize, anchor=west] (cxy_rp) at (cxy_in2.east) {$)=$};
\node[outblock, minimum width=0.9cm, pattern=diagAB, draw=gray!70, anchor=west] (cxy_out) at (cxy_rp.east) {};
\node[font=\scriptsize, text=savings, anchor=west] at (cxy_out.east) {5\,B \textit{(shared!)}};

\node[font=\scriptsize] (cxz_lp) at (0.6, -7.3) {$C($};
\node[file, fill=fileA!15, draw=fileA!70, inner sep=2pt, anchor=west] (cxz_in1) at (cxz_lp.east) {\textcolor{fileA!80!black}{\scriptsize ababa}};
\node[file, fill=fileC!15, draw=fileC!70, inner sep=2pt, anchor=west] (cxz_in2) at ([xshift=-0.4pt]cxz_in1.east) {\textcolor{fileC!80!black}{\scriptsize abcefg}};
\node[font=\scriptsize, anchor=west] (cxz_rp) at (cxz_in2.east) {$)=$};
\node[outblock, minimum width=1.5cm, pattern=diagAC, draw=gray!70, anchor=west] (cxz_out) at (cxz_rp.east) {};
\node[font=\scriptsize, text=nosavings, anchor=west] at (cxz_out.east) {8\,B \textit{(no help)}};

\node[font=\footnotesize\bfseries, anchor=west] at (0, -8.4) {Step 3: Compute NCD};

\node[anchor=north west, font=\scriptsize] at (0.3, -9.0) {%
$\text{NCD}(x,y) = \dfrac{C(xy) - \min\{C(x), C(y)\}}{\max\{C(x), C(y)\}} = \dfrac{5 - 4}{4} = \textcolor{savings}{\mathbf{0.25}}$
};

\node[anchor=north west, font=\scriptsize] at (0.3, -10.2) {%
$\text{NCD}(x,z) = \dfrac{C(xz) - \min\{C(x), C(z)\}}{\max\{C(x), C(z)\}} = \dfrac{8 - 4}{5} = \textcolor{nosavings}{\mathbf{0.80}}$
};

\draw[thick, gray!50] (0, -11.3) -- (7.0, -11.3);
\node[anchor=north west, font=\scriptsize] at (0, -11.4) {%
$\textcolor{savings}{0.25} \ll \textcolor{nosavings}{0.80}$\; $\Rightarrow$ compressor confirms $x$ and $y$ are more similar.
};

\end{tikzpicture}%
}
\caption{Illustration of NCD between three byte sequences. Sequences $x$ and $y$ share repeating sub-patterns, so their joint compression $C(xy)$ yields a small NCD. Sequence $z$ shares little structure with $x$, resulting in a large NCD. Output block widths represent compressed sizes; diagonal stripes indicate the source sequences.}
\label{fig:ncd_example}
\end{figure}

 NCD is a realizable version of information distance which approximates Kolmogorov complexity using real compressors. For a deeper treatment of Kolmogorov complexity, information distance, compression distance, and their applications, see~\cite{LiVitanyi2008}.

Compression-based ML has long shown a historical parallel to deep learning in that it eschews handcrafted feature engineering, allowing ``the algorithm'' to ``figure it out''. This compression-as-learning has been most widely applied to text categorization~\cite{wan2024,jiang-etal-2023-low,bratko2006spam,marton2005compression,li2004similarity,cilibrasi2005clustering,Benedetto_2002,eibeTextCategorization}, malware detection~\cite{raff2020new,Raff2017MalwareCA,raff_lzjd_2017,borbely2015normalizedcompressiondistancelarge,alshahwan2015detectingmalwareinformationcomplexity,Wehner2005AnalyzingWA}, DNA sequence clustering~\cite{li2004similarity}, and data mining and anomaly detection~\cite{keogh2004,keogh2007}. A correspondence between compression distance approaches and traditional feature vector approaches has been established~\cite{sculley2006}, with the mechanism of the compressor defining an implicit feature space.

There exist variants of NCD that capture the same underlying notion of similarity. The Chen-Li Metric reads $\textnormal{CLM}(x,y) = 1 - \frac{C(x) - C(x|y)}{C(xy)}$; \cite{keogh2004} defines the \textit{Compression-based Dissimilarity Measure} (CDM):
$\textnormal{CDM}(x,y) = \frac{C(xy)}{C(x) + C(y)}$;
and \cite{sculley2006} defines \textit{compression-based cosine}:
$\text{CosS}(x,y) = 1 - \frac{C(x) + C(y) - C(xy)}{\sqrt{C(x)C(y)}}$.
NCD, CDM, and CosS have all been shown to reduce to a canonical 
form~\cite{sculley2006}:
$1 - \frac{C(x) + C(y) - C(xy)}{f(x,y)}$ where $f(x,y)$ is a normalizing term. This helps to explain the highly similar experimental results regardless of the particular choice of these compression distance formulations~\cite{sculley2006,jiang_few-shot_2022}.

\subsection{Compression as Likelihood}

Machine learning is in the business of inductive inference, and the Minimum Description Length (MDL) principle~\cite{rissanen1978} proposes a solution to the model selection problem through the lens of compression. In MDL, the best model is the one that best compresses the data, considering the size of the model itself. Often described as a formalization of Occam's Razor, MDL explicitly ties learning to compression. The minimization of the size of the encoding of the model plus the size of the encoding of the data, given the model, coincides with a natural take on practical data compression as balancing the compressed representation with the size of the compression program. The MDL principle implies at least one way that learned compression models can yield classifications on data in an ML context: by learning a separate model per class, the class is chosen whose model best compresses the test sample. 

Formally, given a dataset $D$ and a set of candidate hypotheses $\mathcal{H}$, the MDL principle selects the hypothesis $H\in\mathcal{H}$ that minimizes the total description length $L(H) + L(D|H)$ where $L(H)$ represents the length of the description of the model, and $L(D|H)$ is the length of the data when encoded with the help of the model. Using the well-known correspondence between probability and code lengths that an event with probability $p$ can be encoded in $-\log_2p$ bits, we see that the model which minimizes description length is equivalently the maximum a posteriori (MAP) estimate from Bayesian inference. Therefore both the MDL and the MAP prediction rule select the hypothesis which maximizes the posterior probability of the hypothesis given the data: 
\begin{equation}
    \argmin_H  L(H) + L(D|H) = \argmax_H P(H|D)
\end{equation}

\begin{figure}[!h]
\centering
\resizebox{\columnwidth}{!}{\definecolor{class1}{RGB}{66,133,244}   %
\definecolor{class2}{RGB}{245,140,30}   %
\definecolor{testc}{RGB}{110,110,120}   %
\definecolor{cheap}{RGB}{46,139,87}     %
\definecolor{costly}{RGB}{180,60,60}    %

\pgfdeclarepatterninherentlycolored{diag1}{\pgfqpoint{-1pt}{-1pt}}{\pgfqpoint{5pt}{5pt}}{\pgfqpoint{4pt}{4pt}}{
  \pgfsetlinewidth{0.8pt}\pgfsetcolor{class1!70}
  \pgfpathmoveto{\pgfqpoint{-0.5pt}{-0.5pt}}\pgfpathlineto{\pgfqpoint{5.5pt}{5.5pt}}\pgfusepath{stroke}
}
\pgfdeclarepatterninherentlycolored{diag2}{\pgfqpoint{-1pt}{-1pt}}{\pgfqpoint{5pt}{5pt}}{\pgfqpoint{4pt}{4pt}}{
  \pgfsetlinewidth{0.8pt}\pgfsetcolor{class2!75}
  \pgfpathmoveto{\pgfqpoint{-0.5pt}{-0.5pt}}\pgfpathlineto{\pgfqpoint{5.5pt}{5.5pt}}\pgfusepath{stroke}
}

\begin{tikzpicture}[
    file/.style={draw, rounded corners=2pt, minimum height=0.7cm, inner sep=5pt, font=\ttfamily\small},
    outblock/.style={draw, rounded corners=2pt, minimum height=0.7cm},
    seg/.style={draw, rounded corners=1pt, minimum height=0.7cm, inner sep=0pt},
    label/.style={font=\small},
    every node/.style={align=center},
]

\node[label, anchor=south, font=\bfseries] at (0.7, 8.3) {Test sample $x$, unknown class};
\node[file, fill=testc!12, draw=testc, minimum width=1.7cm] (xtest) at (0.7, 7.7) {\textcolor{testc!85!black}{ababab}};

\node[font=\bfseries, text=class1!80!black] at (-2.2, 6.7) {Class 1};
\node[font=\bfseries, text=class2!70!black] at ( 3.6, 6.7) {Class 2};

\node[font=\bfseries, anchor=west] at (-6.4, 6.1) {Step 1: build class models};
\node[file, fill=class1!15, draw=class1!70, minimum width=2.2cm] (b1) at (-2.2, 5.2) {\textcolor{class1!80!black}{abababab}};
\node[label, anchor=south] at (b1.north) {$B_1$};
\node[file, fill=class2!18, draw=class2!75, minimum width=2.2cm] (b2) at ( 3.6, 5.2) {\textcolor{class2!65!black}{xyxyxyxy}};
\node[label, anchor=south] at (b2.north) {$B_2$};

\node[font=\bfseries, anchor=west] at (-6.4, 4.1) {Step 2: compress each model};
\node[outblock, minimum width=2.0cm, pattern=diag1, draw=class1!70, anchor=west] (cb1) at (-3.2, 3.5) {};
\node[label, anchor=north] at (cb1.south) {$C(B_1)=10$ B};
\node[outblock, minimum width=2.0cm, pattern=diag2, draw=class2!75, anchor=west] (cb2) at ( 2.6, 3.5) {};
\node[label, anchor=north] at (cb2.south) {$C(B_2)=10$ B};

\node[font=\bfseries, anchor=west] at (-6.4, 2.2) {Step 3: append $x$, recompress};
\node[outblock, minimum width=2.0cm, pattern=diag1, draw=class1!70, anchor=west] (cb1x) at (-3.2, 1.6) {};
\node[seg, minimum width=0.2cm, fill=cheap, draw=cheap!60!black, anchor=west] (d1) at (cb1x.east) {};
\node[label, anchor=north] at ([xshift=0.1cm]cb1x.south) {$C(B_1 x)=11$ B};
\draw[decorate, decoration={brace, amplitude=4pt}, cheap!55!black]
  ([yshift=2pt]d1.north west) -- ([yshift=2pt]d1.north east)
  node[midway, above=3pt, font=\scriptsize, text=cheap!55!black] {$\Delta_1$};
\node[outblock, minimum width=2.0cm, pattern=diag2, draw=class2!75, anchor=west] (cb2x) at (2.6, 1.6) {};
\node[seg, minimum width=1.2cm, fill=costly, draw=costly!60!black, anchor=west] (d2) at (cb2x.east) {};
\node[label, anchor=north] at ([xshift=0.3cm]cb2x.south) {$C(B_2 x)=16$ B};
\draw[decorate, decoration={brace, amplitude=4pt}, costly!55!black]
  ([yshift=2pt]d2.north west) -- ([yshift=2pt]d2.north east)
  node[midway, above=3pt, font=\scriptsize, text=costly!55!black] {$\Delta_2$};

\node[font=\bfseries, anchor=west] at (-6.4, 0.1) {Step 4: conditional code length};
\node[anchor=west, font=\small] at (-4.5, -0.6)
  {$\Delta_1 = C(B_1 x) - C(B_1) = 11 - 10 = \textcolor{cheap}{\mathbf{1}}$ B};
\node[anchor=west, font=\small] at (-4.5, -1.3)
  {$\Delta_2 = C(B_2 x) - C(B_2) = 16 - 10 = \textcolor{costly}{\mathbf{6}}$ B};

\draw[thick, gray] (-6.4, -2.2) -- (6.2, -2.2);
\node[anchor=north, font=\small] at (-0.1, -2.4)
  {$\hat{y} = \arg\min_i \Delta_i$: since $\Delta_1=\textcolor{cheap}{1} \ll \textcolor{costly}{6}=\Delta_2$, predict Class 1.};

\end{tikzpicture}}
\caption{MDL classification (compression as likelihood). Two classes, each represented by a model $B_i$ built from its training data, are compressed alone and then with the test sample $x$ appended. The conditional code length $\Delta_i = C(B_i x) - C(B_i)$ is the extra cost of encoding $x$ under each model; $x$ is assigned to the class minimizing it. Output block widths represent compressed sizes; the solid segment is $\Delta_i$.}
\label{fig:mdl_example}
\end{figure}

\subsection{When NCD and MDL coincide}
Though a rigorous analysis of when a compression-distance approach or an MDL approach is preferable remains an open challenge, we can make a simple but unifying observation: Both NCD and MDL ultimately rank candidates by how effectively a compressor can use side information to encode a test sample. In NCD, this side information is another data sample, while in MDL it is a model. Under a uniformity condition on the complexity of the side information, these two viewpoints induce the same ranking over candidates, allowing NCD to be interpreted as an MDL-style conditional code length in which data points act as implicit models, and allowing MDL to be interpreted as selecting the model which minimizes its NCD to the test point under the same uniformity condition. In particular, the context point minimizing the MDL objective also minimizes NCD.
\begin{proposition}
Let $\Sigma$ be a set of symbols, let $A: \Sigma^*\times\Sigma^*\to\Sigma^*$ be an aggregation operation, let $x$ be a fixed test sample, and let $\{s_i\}$ be a set of context objects. Let $C(\cdot)$ denote compressed length under a fixed compressor. Assume $C(s_i) = L$ for all $i$ (uniform compressed context length). Then
\[
\arg\min_i [C(A(s_i,x)) - C(s_i)] \;=\; \arg\min_i \text{NCD}(s_i,x).
\]
\end{proposition}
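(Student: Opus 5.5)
The plan is to show that, under the uniformity assumption $C(s_i)=L$, both objectives are strictly increasing affine functions of the same quantity $c_i := C(A(s_i,x))$. Their minimizers then coincide as sets, which also handles ties. No earlier result beyond the definition of NCD in \eqref{eq:NCD} is needed.

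\textbf{Interpreting NCD with the aggregation $A$.} The definition in \eqref{eq:NCD} uses the joint term $C(xy)$, i.e.\ concatenation. I would state at the outset that $\text{NCD}(s_i,x)$ is computed with the same aggregation as the MDL objective:
\[
\text{NCD}(s_i,x)=\frac{C(A(s_i,x))-\min\{C(s_i),C(x)\}}{\max\{C(s_i),C(x)\}}.
\]
This recovers \eqref{eq:NCD} exactly when $A(s,x)=sx$. This step is where I expect the only real subtlety. If NCD used a different joint term than the MDL objective, the two rankings could genuinely differ. The proposition therefore has to be read with a common $A$, and I would make that reading explicit rather than try to prove something stronger.

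\textbf{Reduction to constants.} Since $x$ is fixed, $C(x)$ is a constant. Together with $C(s_i)=L$ for all $i$, this gives
\[
m:=\min\{L,C(x)\}\quad\text{and}\quad M:=\max\{L,C(x)\},
\]
and both are independent of $i$. I would note that $M>0$, since a compressed length of zero for both objects is degenerate and excluded; this is all that is needed for the NCD to be well defined. Then
\[
\text{MDL}_i = c_i - L, \qquad \text{NCD}(s_i,x)=\frac{c_i-m}{M}.
\]

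\textbf{Conclusion.} Both maps $c\mapsto c-L$ and $c\mapsto (c-m)/M$ are strictly increasing, because $M>0$. So for any $i,j$, each of the following statements is equivalent to $c_i\le c_j$:
\[
\text{MDL}_i\le \text{MDL}_j \qquad\text{and}\qquad \text{NCD}(s_i,x)\le \text{NCD}(s_j,x).
\]
The two objectives therefore induce the same total preorder on the index set. Their $\arg\min$ sets both equal $\arg\min_i c_i$, and hence equal each other, which proves the proposition. In particular, the context point minimizing the MDL objective also minimizes NCD.
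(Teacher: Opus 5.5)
Your proof is correct and follows the paper's argument: both reduce each objective to $\arg\min_i C(A(s_i,x))$, using that $C(s_i)=L$ and $C(x)$ do not depend on $i$. Your additions are details the paper leaves implicit: computing NCD with the same aggregation $A$, requiring $\max\{L,C(x)\}>0$, and handling ties via strictly increasing affine maps.
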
 
\begin{proof}
The left hand side is equivalent to $\arg\min_iC(A(s_i,x))$ as $C(s_i)$ is constant. Expanding the right hand side, $\arg\min_i \text{NCD}(s_i,x) = \arg\min_i \frac{C(A(s_i,x)) - \min\{C(s_i),C(x)\}}{\max\{C(s_i),C(x)\}} = \arg\min_i C(A(s_i,x))$ as $C(x)$ is also constant.
\end{proof}
When $C(s_i)$ is not uniform, MDL and NCD differ in how they account for context complexity: MDL applies an explicit penalty, while NCD applies normalization. We provide an illustration of how the MDL rule is used to classify a sample given per-class compression models in Fig.~\ref{fig:mdl_example}.

\subsection{When is compression-based similarity meaningful?}
As the theoretical justification for compression distance is an approximation to conditional Kolmogorov complexity, the mechanism and quality of a compressor affect the quality of the induced compression distance. Below, we crystallize some previously informal considerations of the compression algorithm that affect NCD.

\textbf{Appropriate compressor for data:} Clearly, the chosen compressor should be good at compressing the data at hand in order to define an effective distance. 

\textbf{Window size:} For the NCD to be meaningful, the compressor must be able to identify shared information across both sequences. As traditional compressors tend to have a fixed-size window (for example, \texttt{gzip} usually operates with a 32KB sliding window), the aggregated sequence $xy$ should fit within this window. NCD has been empirically shown to increase towards 1 when the sequence does not fit within the compressor's window \cite{Cebrian2005NCDPitfalls}.

\textbf{Effective compression of $xy$:} NCD not only requires the compressor to be good at compressing individual samples, but it should also be good at compressing the aggregated sequence $xy$. The extent to which the compressor improves its ability to compress $y$ when given $x$'s information is crucial to defining a meaningful NCD metric. As a warning, in neural compression, LLMs compress text well but yield unpredictable results when performing NCD-based classification, with a likely explanation that compressed concatenations result in out-of-distribution sequences, which worsens the model's ability to identify semantic similarity~\cite{hurwitz2024neuralnormalizedcompressiondistance,hurwitz2025llm_ncd}.

Given the superior compression ability of neural compressors, the natural question arises: \textit{do the improved compression ratios of neural compressors translate into improved predictive performance via compression distance?} This is a new area of research, and there is much that is still unknown. The first published work in the area of neural compressors for NCD-based classification uses VAE neural compressors for few-shot image classification~\cite{jiang_few-shot_2022}, showing an improvement in accuracy over traditional compressors. Another work applied LLMs to neural text compression for few-shot text classification \cite{hurwitz2024neuralnormalizedcompressiondistance}, presenting counterintuitive experimental evidence of a lack of correlation between compression ratio and NCD accuracy. This is likely due to the different mechanism of the $C(xy)$ term when using neural compressors versus traditional compressors. When using an LLM to compress $xy$, shared information is successfully utilized if the probabilities assigned to the tokens in $y$ are higher than they would be when considered alone, as a result of having previously processed the tokens in $x$. There is much work yet to be done in the application of neural compressors to compression-based ML.

\section{Compression-Based Machine Learning} \label{sec:comp_ml}
Having described the two paradigms of compression-as-similarity and compression-as-likelihood, we now formalize that much of the compression-based ML literature can be understood as making a small number of design choices. While methods differ in whether they rely on pairwise similarity or compression models, they share a common structure: a compressor induces code lengths, portions of training data are made available to the compressor as context, a scalar measure is derived from compressed lengths, and a prediction rule maps these quantities to outputs.  Our insight is that many prior works can be unified into a broader framework that characterizes these design choices which we describe as four factors: Compression algorithm, Context, Compression measure, and Prediction technique.  Examples of prior works encoded in this framework are given in Table \ref{tab:compression-ml}. 

\begin{table*}[t!]
\caption{Papers in compression-based ML and their settings under our framework.}
\centering
\setlength{\tabcolsep}{3pt} %
\renewcommand{\arraystretch}{1.1} %

\begin{tabularx}{\textwidth}{@{}X p{2.8cm} X X X X@{}}
\toprule
\textbf{Work} & \textbf{Compressor} & \textbf{Measure} & \textbf{Context Source} & \textbf{Context Injection Mechanism} & \textbf{Prediction} \\
\midrule
\cite{eibeTextCategorization} & PPM & conditional code length & probabilistic model & model-conditioned encoding & MDL \\
\cite{li2001} & domain-specific (DNA) & NCD (variant) & single sample & string concat & tree-building \\
\cite{Benedetto_2002} & gzip & conditional code length & aggregated class data & string concat & 1-NN, tree-building \\
\cite{teahan2001compression} & PPM & conditional code length & probabilistic model & model-conditioned encoding & MDL \\
\cite{li2004similarity} & gzip, domain-specific (DNA) & NCD & single sample & string concat & tree-building \\
\cite{keogh2004} & zip & CDM & single sample & string concat & 1-NN, clustering \\
\cite{chen_shared_2004} & domain-specific (TokenCompress) & NCD & single sample & string concat & similarity score\\
\cite{cilibrasi2005clustering} & gzip, bzip2, PPMZ & NCD & single sample & string concat & tree-building, clustering \\
\cite{marton2005compression} & RAR, gzip, LZW & NCD, conditional code length & single sample, aggregated class data & string concat & MDL, 1-NN \\
\cite{Wehner2005AnalyzingWA} & bzip2, zlib & NCD & single sample & string concat & tree-building, 1-NN \\
\cite{sculley2006} & PPM, LZW, LZ77 & NCD, CosS, CDM, CLM & single sample & string concat & k-NN\\
\cite{bratko2006spam} & PPM, DMC & conditional code length & probabilistic model & model-conditioned encoding & MDL \\
\cite{borbely2015normalizedcompressiondistancelarge} & lzma, bzip2, zlib, PPMZ & NCD & single sample & interleaving, similarity chunking & 1-NN \\
\cite{raff_lzjd_2017} & lzset & LZJD & single sample (representation) & Jaccard index & k-NN \\
\cite{raff2020new} & BWT + Markov model & BWMD & single sample (representation) & feature vector & k-NN \\
\cite{kasturi2021} & zstandard & conditional code length (ensemble) & trained zstandard dictionary & model-conditioned encoding & Gradient-Boosted Decision Tree w/ code length features\\
\cite{jiang_few-shot_2022} & VAE + bits-back coding & NCD & single sample & pixel-wise aggregation & k-NN \\
\cite{jiang-etal-2023-low} & gzip & NCD & single sample & string concat & k-NN \\
\cite{hurwitz2025llm_ncd} & LLMs (e.g. Llama3, GPT-2)& NCD & single sample & string concat & k-NN \\
\bottomrule
\end{tabularx}
\label{tab:compression-ml}
\end{table*}

We describe compression-based ML in terms of four primary design choices: a compression algorithm inducing a compressed length function $C$, a context $S$ derived from the training data which %
provides the compressor with side information, a compression measure $M$ defined over code lengths, and a prediction technique $P$. Let $\Sigma$ be a finite alphabet and $\Sigma^*$ the set of all finite strings over $\Sigma$. Let the labeled training set be $\mathcal{T} = \{(z_i,y_i)\}_{i=1}^{N}$ with $z_i \in \Sigma^*, y_i \in \mathcal{Y}$. For each class $y \in \mathcal{Y}$, let $D_y$ denote its training data. The design choices constituting this compression-based ML framework are as follows.

\textbf{Compression algorithm ($C$):} Determines the length of compressed output and thus directly affects the induced distance or code length. Given a particular compression algorithm, and optionally a class-specific compression model $m$, we define a compressed-length function:
$C_m: \Sigma^* \to \mathbb{Z}$.

\textbf{Context ($S$):} The side information which is provided to the compressor prior to encoding a test sample and affects the output compressed length. Depending on the method, context may take the form of another sample, an aggregation of multiple samples according to some aggregation operation, or an explicit learned compression model. To cleanly separate the notions of the side information itself from how it is exposed to the compressor, we refer to the side information as the context source, while we refer to the mechanism by which this side information is exposed to the compressor as the context injection mechanism. 

For clarity, consider $\text{NCD}(y,x)$ between a test sample $x$ and train sample $y$. The context source is $y$, while the context injection mechanism is the aggregation of $x$ and $y$, typically sequence concatenation, resulting in the compressor first compressing one then compressing the other with the benefit of whatever patterns the compressor has already discovered from the first sequence. For MDL classification methods, the context is viewed as a \textit{model}. This can either be an \textit{explicit} model constructed via an algorithm, or an \textit{implicit} model which relies on the compressor's state after compressing the contextual source data. The context source is the learned compression model, while the context injection mechanism describes how the compressor uses that model to compress new data.

\textbf{Compression measure ($M$):} A scalar derived from compressed lengths that serves as the basis for prediction. This may be a compression ratio, a normalized distance (e.g. NCD, LZJD), a conditional code length, or another code-length-based expression. A compression measure maps $r$ compressed lengths to a scalar:
$M: \mathbb{Z}^r \to \mathbb{R}$.

\textbf{Prediction technique ($P$):} Maps compression measures to a prediction. For distance-based measures, this typically involves $k$-NN or clustering. For code-length-based measures, prediction is typically made by selecting the class whose compression model best optimizes the compression measure. Prediction maps $n$ real-valued inputs to a label:
$P: \mathbb{R}^n \to \mathcal{Y}$.

\begin{figure*}[t!]
\centering
\resizebox{0.9\textwidth}{!}{\definecolor{simcol}{RGB}{66,133,244}   %
\definecolor{likcol}{RGB}{245,140,30}   %
\definecolor{slate}{RGB}{70,80,95}
\begin{tikzpicture}[
    header/.style={rounded corners=3pt, fill=slate, text=white,
                   font=\bfseries\footnotesize, minimum width=2.7cm,
                   minimum height=0.6cm, align=center, inner sep=3pt},
    chip/.style={rounded corners=3pt, draw=gray!45, fill=gray!8,
                 text width=2.4cm, align=center, font=\scriptsize,
                 minimum height=0.85cm, inner sep=3pt},
    simchip/.style={chip, draw=simcol, line width=0.9pt, fill=simcol!8},
    likchip/.style={chip, draw=likcol, line width=0.9pt, fill=likcol!10},
    flowsim/.style={-{Stealth[length=2.4mm]}, simcol, line width=1.6pt, opacity=0.85},
    flowlik/.style={-{Stealth[length=2.4mm]}, likcol, line width=1.6pt, opacity=0.85},
    pipe/.style={-{Stealth[length=2mm]}, gray!55, line width=1pt},
    every node/.style={align=center},
]
\node[header] (hc) at (0.0, 4.3) {$C$: Compressor};
\node[header] (hs) at (3.1, 4.3) {$S$: Context};
\node[header] (hm) at (6.2, 4.3) {$M$: Measure};
\node[header] (hp) at (9.3, 4.3) {$P$: Prediction};
\draw[pipe] (hc.east) -- (hs.west);
\draw[pipe] (hs.east) -- (hm.west);
\draw[pipe] (hm.east) -- (hp.west);
\node[simchip] (c1) at (0.0, 3.15) {\texttt{gzip}, \texttt{bzip2}, \texttt{LZMA},...};
\node[likchip] (c2) at (0.0, 2.05) {\texttt{zstandard (dictionary)}};
\node[chip]    (c3) at (0.0, 0.95) {PPM, DMC};
\node[chip]    (c4) at (0.0,-0.15) {LZJD, BWMD\\\textit{(extract representation)}};
\node[chip]    (c5) at (0.0,-1.25) {VAE, LLM + entropy coding\\\textit{(neural)}};
\node[simchip] (s1) at (3.1, 3.15) {single sample\\\textit{concat / interleave}};
\node[likchip] (s2) at (3.1, 2.05) {aggregated\\class data};
\node[chip]    (s3) at (3.1, 0.95) {per-class learned model\\\textit{compression dictionary / compressor state}};
\node[simchip] (m1) at (6.2, 3.15) {NCD, CDM, CosS};
\node[chip]    (m2) at (6.2, 2.05) {LZJD distance};
\node[likchip] (m3) at (6.2, 0.95) {conditional\ code length\\$C(B_ix){-}C(B_i)$};
\node[chip]    (m4) at (6.2,-0.15) {$C_{m_i}(x)$};
\node[simchip] (p1) at (9.3, 3.15) {$k$-NN};
\node[chip]    (p2) at (9.3, 2.05) {clustering / tree};
\node[likchip] (p3) at (9.3, 0.95) {MDL $\arg\min$};
\node[chip]    (p4) at (9.3,-0.15) {use compression ratio features with traditional ML techniques};
\draw[flowsim] (c1.east) -- (s1.west);
\draw[flowsim] (s1.east) -- (m1.west);
\draw[flowsim] (m1.east) -- (p1.west);
\draw[flowlik] (c2.east) -- (s2.west);
\draw[flowlik] (s2.east) -- (m3.west);
\draw[flowlik] (m3.east) -- (p3.west);
\draw[flowsim] (-1.25,-2.35) -- (-0.45,-2.35);
\node[anchor=west, font=\scriptsize] at (-0.35,-2.35)
  {Compression as similarity, e.g.\ NCD with $k$-NN};
\draw[flowlik] (-1.25,-2.90) -- (-0.45,-2.90);
\node[anchor=west, font=\scriptsize] at (-0.35,-2.90)
  {Compression as likelihood, e.g.\ MDL classification};
\end{tikzpicture}}
\caption{The compression-based ML design space. Every method in Table~\ref{tab:compression-ml} amounts to selecting one option per stage of the pipeline $C \to S \to M \to P$. Compression-as-similarity (blue) picks a single-sample context, a compression distance, and a distance-based predictor; compression-as-likelihood (orange) picks a learned class model such as a compression dictionary or compressor state, a conditional code length, and the MDL rule. }
\label{fig:design-space}
\end{figure*}

We note an additional hyperparameter called compressors per class (CPC), utilized in~\cite{ftcc}, which partitions each class into a number of subsets via the aggregation operation. The CPC is the number of subsets per class. Each subset is now viewed as a single training sample, which is combined with the test point at prediction time. When CPC is equal to the number of samples in the class, each original training sample is treated individually and the full pairwise compression distance matrix is computed between every train point and test point; the computational complexity of this is $O(n^2)$. When CPC is $1$, each class becomes a single point by combining all of its constituent points via the aggregation operation. This changes the computational complexity to $O(n|\mathcal{Y}|)$, a major improvement given a relatively small number of classes $|\mathcal{Y}|$. More generally, each class can be partitioned into an arbitrary number of subsets in order to facilitate chunking of class data. 

There are arguably two natural ways that one may attempt classification with compressors, and indeed some of the earliest works in compression-based ML pursue either one or the other, with only one work that we are aware of explicitly comparing the two \cite{marton2005compression}. The first way is to recognize that a pairwise compression distance matrix allows the use of distance-based algorithms such as $k$-nearest neighbors ($k$-NN) and clustering. The second way is to obtain per-class compression models and select the class of the model that best compresses a test point.  We thus argue that compression-based ML approaches fall broadly into these two categories: compression-distance-based techniques~\cite{li2001,Benedetto_2002,cilibrasi2005clustering,li2004similarity,chen_shared_2004,keogh2004,marton2005compression,keogh2007,Wehner2005AnalyzingWA,borbely2015normalizedcompressiondistancelarge,raff_lzjd_2017,raff2020new,jiang_few-shot_2022,jiang-etal-2023-low}, which usually normalize compressed lengths to obtain a distance metric (e.g. NCD), though some instead derive distances from intermediate representations of the compressor~\cite{raff_lzjd_2017,raff2020new}, and MDL-inspired techniques~\cite{eibeTextCategorization,teahan2001compression,marton2005compression,ftcc}, which treat compressed lengths as code lengths and classify by selecting the class model yielding the shortest code length and thus assigning the maximum likelihood. Both categories rely on compressor outputs, but differ in whether they interpret them as distances or as proxies for likelihood.

\subsection{Compression-distance-based prediction}
\label{sec:compression_distance_prediction}
Given a compressor and classification task, one can perform compression-based classification by computing the compression distance matrix between every pair of test points and train points, and using these distances for $k$-nearest neighbors. Given a compressor, $C$ is the induced code length function, the context $S$ is the second point injected typically via sequence concatenation, and $M$ is a chosen compression distance function such as NCD. Prediction $P$ is then applied to the resulting distances (e.g., via $k$-NN). In compression-distance-based prediction, context injection is achieved by an aggregation operation that combines the test sample with the context sample: $A: \Sigma^* \times \Sigma^* \rightarrow \Sigma^*$. Throughout the NCD literature, aggregation is most often ordinary string concatenation. The ultimate purpose of $A$ is to integrate the information from two sequences $x$ and $y$ in a manner which allows a good approximation of the information distance.

For a test sample $x \in \Sigma^*$ , training sample $z \in \mathcal{T}$, and an aggregation operation $A$, our compression measure $M$ is the NCD. For ease of reading, we introduce the short hand notation: let $C_x = C(x)$, $C_z = C(z)$, and $C_{zx} = C\!\big(A(z,x)\big)$.
\begin{align*}
M_{\text{NCD}}(C_x, C_z, C_{zx})
  &= \frac{C_{zx} - \min\{C_z, C_x\}}{\max\{C_z, C_x\}}.
\end{align*}
Collecting over all training samples gives 
$d(x) = \big(d(z_i,x)\big)_{i=1}^N \in \mathbb{R}^N,
$
and our prediction $\hat{y}$ is then given by 
$\hat{y} = P(d(x)).$

As the $C(x)$ and $C(y)$ terms are reused across pairs, these can be precomputed in time linear in the number of samples (in the case where the aggregation operation is string concatenation). The complexity of the pairwise approach is dominated by the computation of the distance matrix, which requires $\mathcal{O}(n^2)$ compressor calls.

We note that in the context of sequential data compression and a string concatenation aggregation, there is redundant computation in the naive construction of the pairwise NCD matrix. For all $x$ and $y$, we must calculate $C(xy)$. For identical $x$, the state of the compressor after compressing the $x$ portion is identical no matter the $y$. For fast compressors and low data regimes, this may not be a hindrance. However, where computational efficiency is a concern, an approach that is able to cache $x$'s compression dictionary and initialize a compressor with it will work around this redundancy. %

\begin{algorithm}[!h]
\caption{NCD-KNN Classification}
\label{alg:ncd-knn}
\begin{algorithmic}
\State Precompute $C(y_j)$ for all training samples
\State Precompute $C(x_i)$ for all test samples

\For{each test sample $x_i$}
    \For{each training sample $y_j$}
        \State Compute $C(x_i y_j)$
        \State Compute NCD and store in distance matrix:
        \[
        D[i][j] = \frac{C(x_i y_j) - \min(C(x_i), C(y_j))}{\max(C(x_i), C(y_j))}
        \]
    \EndFor
\EndFor 

\State Apply $k$-nearest neighbors using distance matrix $D$
\end{algorithmic}
\end{algorithm}

Though it is most typical for the aggregation operation to be conventional string concatenation, aggregation generalizes any operation that combines $x$ and $y$ into a new representation. In the image domain, pixel-wise operations have been used to combine images for NCD-based $k$-NN classification~\cite{jiang_few-shot_2022}. In malware detection on large files, interleaving/rearranging chunks of each file has been used to place the similar portions close to each other, allowing for more meaningful NCDs when simple concatenation would render the compressor's sliding window too small to identify similarity across files~\cite{borbely2015normalizedcompressiondistancelarge}.
Another alternative to the high cost of $k$-NN style NCD is to somehow ``featurize'' the result into a vector.~\cite{alshahwan2015detectingmalwareinformationcomplexity} used a randomly chosen reference set of exemplars from each class and created a feature vector for each sample by calculating the NCDs between the sample and each reference point. In addition to the compression ratio, these features were used in decision trees and random forests for malware classification.

Others have gone to a more direct path of vectorization. Notice that a naive pairwise NCD matrix construction requires much redundant computation: for some fixed sample $x$, the computations $C(xy)$ are identical up to the moment they finish processing $x$. It would be more efficient to directly compare the compressor's learned model for each sequence. The \textit{Lempel-Ziv Jaccard Distance} (LZJD)~\cite{raff_lzjd_2017} is an approach to mapping sequences to a smaller \textit{set} representation which can be directly compared via Jaccard similarity. LZJD only implements the aspect of LZ compression which constructs the compression dictionaries $A$ and $B$ of two sequences $x$ and $y$, respectively, ignoring the extra steps involved in practical compression. Here, a compression dictionary is a set of unique subsequences. LZJD is defined as one minus the Jaccard similarity between the two sequences' compression dictionaries:
$\textnormal{LZJD}(x,y) = 1 - \frac{|A \cap B|}{|A \cup B|}$~\cite{raff_lzjd_2017,Raff2017MalwareCA}. LZJD has been used in sequence classification, with particular success in malware classification involving long byte sequences.
Similarly, the \textit{Burrows Wheeler Transform Markov Distance} (BWMD)~\cite{raff2020new} exploits the same idea of ``extracting'' the representation of the compressor to produce feature vectors, and instead applies the same trick to the Burrows Wheeler Transform (or BWT, as used in the \texttt{bzip2} compression program), which can be modeled as a Markov process since the BWT is meant to be used with run-length encoding.

\subsection{MDL-inspired prediction with compression models}
\label{sec:MDL}
A compressor can be used to classify even without invoking a formal notion of distance. Like NCD, approaches inspired by the Minimum Description Length principle have seen wide application to sequence prediction tasks such as text categorization~\cite{eibeTextCategorization,teahan2001compression,marton2005compression,bratko2006spam,ftcc}. Imagine we have a classification problem with classes $C_1, ..., C_m$, along with a compressor. Now consider, for each class $C_i$, concatenating together all of its samples into a single sequence $B_i$. With the intuition that popular dictionary-based compression algorithms learn a model of the input as they compress, compressing $B_i$ will result in the compression program building a model unique to the data in that class. Given a test point $x$, a simple classification scheme then is to compress $B_i x$ for each class, and predict the class which minimizes $C(B_i x) -  C(B_i)$. This predicted class is the one that is most similar to $x$ according to the compressor's model. In other words, it is the class whose learned model best compresses $x$. For Lempel-Ziv-style compressors, similarity will be due to matching byte sequences. We refer to this technique where class data and test data are aggregated and compressed together as implicit-model MDL, as the compression model is built in an online fashion allowing the compressor to learn patterns within the class data prior to compressing $x$.

The procedure is as follows: given a test sample $x\in \Sigma^*$, for each class $i$, construct the context $B_i$ from that class's training data by aggregating all (or some subset) of points from class $i$ aggregated according to a specified aggregation operation. The compression measure is the conditional code length
$\Delta_i(x)
  = M_{\text{MDL}}\!\big(C(B_i),\,C(B_i x)\big) =C(B_i x) - C(B_i)$.
Prediction is then given by
$\hat{y} = \arg\min_{i} \Delta_i(x)$.

We give pseudocode for the implicit-model MDL classification procedure in Algorithm \ref{alg:implicit-mdl}. The conditional code length is the compression cost of compressing $x$ when using the model learned from the class data $B_i$. This approach treats $B_i$ and $x$ not as peer samples but as a class model and a sample, respectively. This expression is rooted theoretically in conditional Kolmogorov complexity, approximating $K(x | B_i)$.

\begin{algorithm}[!h]
\caption{Implicit-Model MDL classification}
\label{alg:implicit-mdl}
\begin{algorithmic}
    \State Aggregate all training samples in each class into $B_j$
    \State Precompute $C(B_j)$ for each class $j$
    \For{each test sample $x_i$}
        \For{each class $j$}
            \State Compute $\Delta_{ij} \gets C(B_j x_i) - C(B_j)$
        \EndFor
        \State Predict $\hat{y}_i \gets \arg\min_j \Delta_{ij}$
    \EndFor
\end{algorithmic}
\end{algorithm}

Some compressors allow the explicit construction/initialization of compression models (e.g. dictionaries), which allow for an MDL-inspired classification technique which is able to cache the compression models, avoiding costly recomputation upon each new inference. For example, the \texttt{zstandard}~\cite{zstandard} compression algorithm allows for the explicit construction of compression dictionaries. Here, the classification procedure is to assign the class whose compression model best compresses a test point. We refer to this technique where a compression model is trained prior to inference as explicit-model MDL. In this case, the conditional code length is simply the compressed length when using the class-specific model and we denote this length $\Delta^*$:
$\Delta^{\star}_i(x)
  = M_{\text{MDL}}\!\big(C_{m_i}(x)\big) = C_{m_i}(x).$
We provide pseudocode for the explicit-model MDL classification procedure in Algorithm \ref{alg:explicit-mdl}.

\begin{algorithm}[H]
\caption{Explicit-Model MDL Classification}
\label{alg:explicit-mdl}
\begin{algorithmic}
    \State \textbf{Given:} $D_i$, the training data for class $i$
    \State For each class $i$, build a class-specific model $m_i$ from $D_i$
    \For{each test sample $x$}
        \For{each class $i$}
            \State $\Delta^{\star}_i(x) \gets C_{m_i}(x)$
        \EndFor
        \State Predict $\hat{y}(x) \gets \arg\min_i \Delta^{\star}_i(x)$
    \EndFor
\end{algorithmic}
\end{algorithm}

We compare runtime complexity of the approaches we have highlighted with respect to the number of compressor calls in Table \ref{tab:complexity}.

\begin{table}[h]
\caption{Computational cost of compression-based ML approaches; $n$ is the number of samples and $|\mathcal{Y}|$ the number of classes. LZJD/BWMD transform single samples into a compressed representation which are directly comparable via their respective dissimilarity measures.}
\centering
\renewcommand{\arraystretch}{1.2}
\begin{tabular}{@{}lcc@{}}
\toprule
\textbf{Approach} & \textbf{Compressor calls} & \textbf{Input per call} \\
\midrule
NCD $+$ $k$-NN        & $\mathcal{O}(n^2)$            & train $+$ test sample \\
LZJD / BWMD           & $\mathcal{O}(n)$             & one sample \\
Implicit-model MDL    & $\mathcal{O}(n|\mathcal{Y}|)$ & class context $+$ test sample \\
Explicit-model MDL    & $\mathcal{O}(n|\mathcal{Y}|)$ & test sample only \\
\bottomrule
\end{tabular}

\label{tab:complexity}
\end{table}

\subsection{Experimental Results}

We run a small set of representative experiments to both highlight the competitiveness of compression-based prediction methods and to show the high performance variability when selecting the $(C,S,M,P)$ design choices of our compression-based ML framework. We choose one dataset per domain: text (AGNews~\cite{zhang2015character}), malware (Drebin~\cite{arp2014drebin}), and images (MNIST~\cite{deng2012mnist}), and sweep across compression programs, context size, compression measure, aggregation operation, and prediction technique, organized around a standard NCD reference technique: \texttt{gzip} with NCD and $1$-NN, one context object per training sample, and string concatenation as the aggregation operation. For compressors, we use \texttt{gzip}, \texttt{bzip2}, \texttt{LZMA}, and \texttt{zstandard} at their default settings, and additionally run \texttt{zstandard} at compression level 15 (default is 3) to separate compression effort from choice of program. For compression measures we use NCD, CDM, CLM and
CosS, together with the conditional code length, and for $P$ we use $k$-NN with
$k \in \{1,3,5\}$, average distance to each class, and the MDL argmin rule. We report accuracy in both few-shot and full dataset settings, though due to the computational complexity of NCD, we only run it up to 1,000 training samples. Few-shot results are averaged over ten seeds and the $n{=}10^3$ setting over three seeds where each seed redraws the training samples, the evaluation subset, and for Drebin, the 90/10 train/test split. We
evaluate on $300$ test samples for AG News and MNIST and $100$ for Drebin, except in the full-corpus setting which uses the complete test sets. Each $\Delta$ is a difference of seed-averaged accuracies for a single fixed
configuration on each side; its standard error across seeds is at most $0.024$
in every setting. NCD/MDL are compared on identical splits against conventional baselines: logistic regression, a linear SVM, and 1-NN over TF-IDF word 1–2-grams (text), hashed byte 4-grams (malware), and raw pixels (images). For text and malware, compression methods use string concatenation for aggregation. For MNIST, aggregation sweeps across string concatenation and pixel-wise addition as in~\cite{jakobs_mnist_gzip}. To illustrate the variability of compression-based performance when sweeping across design choices, we show the accuracy span per choice in Figure \ref{fig:span}.

\begin{figure}[htbp]
    \centering
    \includegraphics[width=\columnwidth]{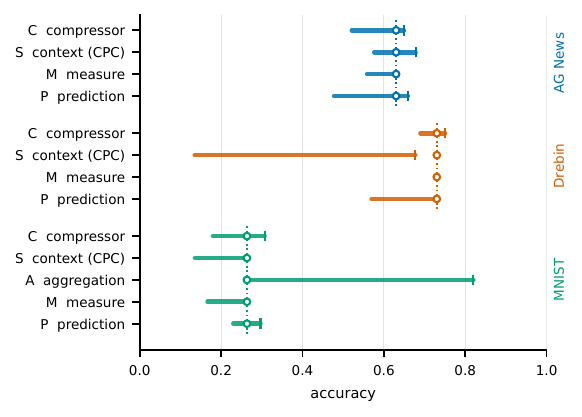}
    \caption{Accuracy span across our compression-based ML design choices. Performance is highly variable, and the degree to which varying one particular design choice matters is domain-dependent. }
    \label{fig:span}
\end{figure}

We report our results in Table \ref{tab:experiments}. NCD and MDL perform competitively in the few-shot setting, especially outperforming the baselines in few-shot malware classification, by as much as 0.27 points, indicating the high learning signal present in compression-based methods even with only a few samples. Taking \texttt{gzip} with NCD and $1$-NN as the reference configuration~\cite{jiang-etal-2023-low}, the
best configuration at the same training budget gains $0.013$ on Drebin, $0.170$
on AG News and $0.619$ on MNIST. Raising \texttt{zstandard} from level 3 to 15 improves accuracy in 82 of 110 configurations, by $0.016$ on average. Overall, MDL requires far fewer compressor calls than NCD while often also obtaining higher accuracy in the text and malware domains. Importantly, the image domain indicates the crucial role of aggregation, as aggregating via byte string concatenation does not aid in compression resulting in degenerate performance. However, modifying the aggregation operation to pixel-wise addition aids in compression of multiple samples and results in a non-trivial MNIST accuracy of up to 0.88 and competitive few-shot learning accuracies. NCD is not run on full MNIST due to computational constraints; at that scale, only MDL is affordable, however in the image domain a compression dictionary built from thousands of aggregated digits does not capture the class better than a single sample does. We therefore stress that a domain-aware aggregation operation is an important design choice worth considering.

\begin{table}[h]
\centering
\caption{Compression-based classification against standard feature-based classifiers in both few-shot and full dataset setting.  \textsc{comp.} and \textsc{base.} report their best configurations. \textsc{method} reports winning compression-based  method.  NCD not run in full dataset setting. Timings are single-threaded on an Apple M1 Max (10-core, 64 GB).}
\label{tab:experiments}
\footnotesize
\begin{tabular}{llcrrrr}
\toprule
Domain & Regime & \textsc{method} & Comp. & Base. & $\Delta$ & ms/query \\
\midrule
text & 1-shot & NCD & 0.344 & 0.314 & +0.030 & 0.96 \\
 & 5-shot & MDL & 0.426 & 0.436 & -0.011 & 0.08 \\
 & 10-shot & MDL & 0.513 & 0.509 & +0.004 & 0.17 \\
 & $n{=}10^3$ & MDL & 0.800 & 0.816 & -0.016 & 0.23 \\
 & full & MDL & 0.906 & 0.927 & -0.021 & 1.22 \\
\midrule
malware & 1-shot & MDL & \textbf{0.396} & 0.181 & \textbf{+0.215} & 17.93 \\
 & 5-shot & MDL & \textbf{0.702} & 0.436 & \textbf{+0.266} & 21.72 \\
 & 10-shot & MDL & \textbf{0.766} & 0.540 & \textbf{+0.226} & 23.14 \\
 & $n{=}10^3$ & NCD & \textbf{0.807} & 0.683 & \textbf{+0.123} & 17.34 \\
 & full & MDL & \textbf{0.936} & 0.852 & \textbf{+0.084} & 281.79 \\
\midrule
images & 1-shot & NCD & 0.434 & 0.432 & +0.001 & 2.68 \\
 & 5-shot & NCD & 0.656 & 0.684 & -0.028 & 12.33 \\
 & 10-shot & NCD & 0.733 & 0.755 & -0.021 & 24.24 \\
 & $n{=}10^3$ & NCD & 0.886 & 0.891 & -0.006 & 228.08 \\
 & full & MDL & 0.385 & 0.969 & -0.584 & 1.78 \\
\bottomrule
\end{tabular}
\vspace{2pt}
\end{table}

From our experimental results we conclude that compression-based ML essentially begs a hyperparameter optimization step across choice of compressor, compression measure, aggregator, and prediction technique, and there are immediate gains to be had over the conventional \texttt{gzip} + $k$-NN configuration popularized in~\cite{jiang-etal-2023-low}. NCD/MDL are especially competitive in the malware setting, where strong featurization techniques are lacking and Lempel-Ziv compression mechanisms essentially provide automatic feature extraction at variable-length n-grams.

\section{Open Questions and Challenges}
\label{sec:challenges}
The hypothesis that better compression yields better prediction is soundly supported when considering theoretically optimal compressors such as Kolmogorov complexity, but requires further empirical and theoretical justification when considering practical compressors. The success of machine learning in improving data compression is a strong signal that learned compressors may provide novel and useful avenues in solving sequential ML tasks via compression-based approaches. State of the art approaches for lossless text compression, for example via the Large Text Compression Benchmark~\cite{mahoney_large_text_compression}, are neural methods such as NNCP~\cite{bellard2021nncp} and CMIX~\cite{cmix}. And popular pre-trained large language models themselves can be used for lossless compression~\cite{deletang2024languagemodelingcompression}. Some initial work has been done in studying the effects of using neural compressors for compression-based ML~\cite{hurwitz2025llm_ncd,jiang_few-shot_2022}, however further study is warranted, especially as neural compressors, despite achieving superior compression, are currently much slower, hindering a larger scale study of their potential for compression-based prediction. Our framework separates the choice of compressor from its context mechanism, and our experiments show that simply modifying the aggregation operation moves MNIST accuracy from $0.27$ to $0.85$. In the neural compression setting, further study of alternative aggregation methods for NCD \& MDL is a promising avenue for realizing improvements, as was previously demonstrated in the malware domain~\cite{borbely2015normalizedcompressiondistancelarge}.

Techniques like LZJD and BWMD learn compressed representations of sequences for the goal of achieving good similarity scores, without the overhead required for decompression. One area of future work is to continue to improve or invent new compressed representations with the end goal of machine learning in mind, to improve accuracy, runtime, and memory use. 

Further, improvements in lossless compression itself have long been hypothesized to yield insights in artificial intelligence, as was the impetus for the Hutter Prize~\cite{Hutter:06hprize} and Mahoney's Large Text Compression Benchmark, and therefore remain an important target.

\section{Conclusion} \label{sec:conclusion}

Despite machine learning having produced new state-of-the-art in compression, and compression having been used for machine learning, the intersection of these two has yet to be fully understood. By formalizing compression-based ML into four key design choices, we find that the literature presents a refreshingly different approach to AI problems, still leaving a wide array of challenges to explore. 

\bibliographystyle{IEEEtran}
\bibliography{references}

\end{document}